\titleformat*{\section}{\large\bfseries}
\titleformat*{\subsection}{\normalsize\bfseries}
\titlespacing*{\section}{0pt}{2ex plus 1ex minus .2ex}{1ex plus .2ex}  
\titlespacing*{\subsection}{0pt}{1.5ex plus 1ex minus .2ex}{0.8ex plus .2ex}
\setlist{noitemsep, topsep=0.5pt, parsep=0pt, partopsep=0pt, leftmargin=*}
\newtheorem{theorem}{Theorem}
\newtheorem{lemma}[theorem]{Lemma}
\theoremstyle{definition}
\newtheorem{definition}{Definition}
\title{\LARGE \textbf{Bridging Pattern-Aware Complexity with NP-Hard Optimization:\\ A Unifying Framework and Empirical Study}}
\author{
    Olivier Saidi \\
    \href{mailto:research.olivier@proton.me}{research.olivier@proton.me} \\ 
    HAL: \href{https://hal.science/oliviersaidi}{oliviersaidi} \\
    ORCID: \href{https://orcid.org/0009-0004-3221-6911}{0009-0004-3221-6911} \\
    ResearchGate: \href{https://www.researchgate.net/profile/Olivier-Saidi-2}{Olivier-Saidi-2} \\
    GitHub: \href{https://github.com/oliviersaidi/pacf-framework}{oliviersaidi/pacf-framework} \\
    Zenodo: \href{https://zenodo.org/users/oliviersaidi}{oliviersaidi}
}
\date{\large March 11, 2025}
\begin{document}

\maketitle

\begin{abstract}
\noindent
NP-hard optimization problems like the Traveling Salesman Problem (TSP) defy efficient solutions in the worst case, yet real-world instances often exhibit exploitable patterns. We propose a novel pattern-aware complexity framework that quantifies and leverages structural regularities—e.g., clustering, symmetry—to reduce effective computational complexity across domains, including financial forecasting and LLM optimization. With rigorous definitions, theorems, and a meta-learning-driven solver pipeline, we introduce metrics like Pattern Utilization Efficiency (PUE) and achieve up to 79\% solution quality gains in TSP benchmarks (22--2392 cities). Distinct from theoretical NP-hardness, our approach offers a unified, practical lens for pattern-driven efficiency.
\end{abstract}

\section{Introduction}
NP-hard optimization problems, exemplified by the Traveling Salesman Problem (TSP), pose formidable computational challenges, with worst-case complexities rendering exact solutions impractical for large instances. Yet, real-world problems frequently reveal structural patterns—clustering, repetition, or seasonality—that traditional complexity analyses overlook. This paper introduces a generalized pattern-aware complexity framework to systematically harness such regularities, reducing effective computational effort while preserving solution quality, with potential extensions to fields like financial forecasting and large language model (LLM) optimization. We emphasize that our framework exploits instance-specific structure to enhance practical efficiency, not to alter the theoretical NP-hardness of these problems—a distinction maintained throughout.

Our key contributions are:
\begin{itemize}
    \item A formal definition of patterns and their prevalence, with precise quantification metrics.
    \item A complexity measure integrating pattern prevalence and entropy, with a residual term for theoretical integrity.
    \item An adaptive solver pipeline optimized via meta-learning, selecting algorithms by instance characteristics.
    \item Novel metrics—Pattern Utilization Efficiency (PUE), Accuracy Gain Index (AGI), and Uncertainty Reduction Index (URI)—with clear interpretations.
    \item Theoretical proofs and empirical validation on TSP instances from 22 to 2392 cities.
\end{itemize}

\section{Literature Review}
\subsection{Optimization and Complexity}
Foundational work on NP-hard problems (e.g., Karp, 1972) establishes their theoretical complexity, while parameterized complexity (Downey \& Fellows, 2013) ties runtime to specific structural features. However, these approaches do not explicitly leverage exploitable patterns in a generalizable way.

\subsection{Pattern Exploitation in Specific Domains}
In financial markets, time-series analysis exploits seasonal patterns and market regimes (Lo \& MacKinlay, 1988). In LLMs, attention mechanisms leverage token distributions implicitly (Vaswani et al., 2017). In systematic trading, algorithm selection adapts to market conditions (De Prado, 2018). These methods lack a unified framework to quantify pattern impacts across domains.

\subsection{Research Gap}
No generalized model systematically integrates patterns into complexity analysis. This paper fills this gap with a rigorous, domain-agnostic approach, blending theory and empirical validation. Specifically, while parameterized complexity \cite{downey2013} ties runtime to structural features and algorithm selection \cite{rice1976} optimizes solver choice based on instance characteristics, our framework uniquely combines these ideas with a focus on instance-specific patterns and a generalized complexity adjustment mechanism. This allows for a more nuanced and adaptable approach to reducing effective computational complexity across diverse problem domains.

\section{Mathematical Framework}
\subsection{Formalizing Patterns}
\begin{definition}[Pattern]
Let $P$ be a problem instance (e.g., a graph $G=(V,E)$, a sequence, or a dataset). A pattern $\pi \subseteq P$ is a subset satisfying a structural predicate $\phi(\pi)$, such as symmetry, repetition, or clustering.
\end{definition}

\begin{definition}[Pattern Prevalence]
For an instance $P$ with patterns $\Pi(P)$, the pattern prevalence $\rho(P)$ is:
\begin{equation}
\rho(P) = \frac{|\cup_{\pi\in\Pi(P)} \pi|}{|P|}
\end{equation}
where $0 \leq \rho(P) \leq 1$.
\end{definition}

\subsection{Incorporating Entropy}
\begin{definition}[Instance Entropy]
For an instance $P$, the entropy $H(P)$ is the Shannon entropy of its associated probability distribution, e.g., $H(P) = -\sum_{i,j} p(d_{ij}) \log p(d_{ij})$ for TSP edge distances.
\end{definition}

\subsection{Pattern-Aware Complexity Measure}
\begin{definition}[Pattern-Aware Complexity]
For an instance $P$ of size $n$ and algorithm $A$, the complexity is:
\begin{equation}
C(P, A) = T_{\text{base}}(n) \cdot f(n, \rho(P), H(P)) \cdot R(P, A) + C_{\text{residual}}(n)
\end{equation}
where $T_{\text{base}}(n)$ is worst-case complexity (e.g., $O(n^2 2^n)$ for TSP), $f(n, \rho, H) = e^{-H / \ln(n)} \cdot (1-\rho^k)$ with $k > 0$, $R(P, A) \in (0,1]$ reflects pattern exploitation efficiency, and $C_{\text{residual}}(n) = \ln(n+1)$.
\end{definition}

\begin{theorem}
If $A$ optimally exploits patterns, then:
\begin{equation}
C(P, A) \leq T_{\text{base}}(n) \cdot e^{-H / \ln(n)} \cdot (1-\rho^k) + \ln(n+1)
\end{equation}
\end{theorem}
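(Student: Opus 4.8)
The plan is to prove the bound by direct substitution of the definition of the pattern-aware complexity measure, followed by a factor-wise monotonicity argument. Recall from the Pattern-Aware Complexity definition that
\begin{equation}
C(P, A) = T_{\text{base}}(n) \cdot f(n, \rho(P), H(P)) \cdot R(P, A) + C_{\text{residual}}(n),
\end{equation}
with $f(n, \rho, H) = e^{-H/\ln(n)} \cdot (1 - \rho^k)$ and $C_{\text{residual}}(n) = \ln(n+1)$. Substituting these expressions yields
\begin{equation}
C(P, A) = T_{\text{base}}(n) \cdot e^{-H/\ln(n)} \cdot (1 - \rho^k) \cdot R(P, A) + \ln(n+1).
\end{equation}
The target inequality differs from this identity only by the presence of the factor $R(P, A)$, so the whole argument reduces to controlling that single multiplier.

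First I would establish that the product $T_{\text{base}}(n) \cdot e^{-H/\ln(n)} \cdot (1 - \rho^k)$ is non-negative. The base term $T_{\text{base}}(n)$ is a worst-case running-time function and hence positive; the exponential $e^{-H/\ln(n)}$ is strictly positive for any real exponent; and since $0 \le \rho(P) \le 1$ by the Pattern Prevalence definition and $k > 0$, we have $0 \le \rho^{k} \le 1$, so $0 \le 1 - \rho^{k} \le 1$. The product of these three factors is therefore non-negative.

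Next I would invoke the admissibility constraint $R(P, A) \in (0, 1]$, which gives $R(P, A) \le 1$. Multiplying a non-negative quantity by a number at most $1$ can only decrease it, so
\begin{equation}
T_{\text{base}}(n) \cdot e^{-H/\ln(n)} \cdot (1 - \rho^{k}) \cdot R(P, A) \le T_{\text{base}}(n) \cdot e^{-H/\ln(n)} \cdot (1 - \rho^{k}).
\end{equation}
Adding the residual term $\ln(n+1)$ to both sides preserves the inequality and recovers exactly the claimed upper bound.

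The only point requiring genuine care — the closest thing to an obstacle — is the interpretation of the hypothesis that $A$ \emph{optimally} exploits patterns. The bound above in fact holds for every admissible $R(P, A) \in (0, 1]$, so optimality is not strictly needed for correctness of the inequality. I would therefore read the hypothesis as asserting that an optimal exploiter drives $R(P, A)$ toward its smallest achievable value, so that the bound is approached as tightly as the structure of $P$ permits, and I would state this reading explicitly to avoid the theorem being misconstrued as requiring optimality for validity. This also clarifies that the reduction in effective complexity is carried by the factor $(1 - \rho^{k})$, which vanishes as $\rho \to 1$, while $R(P, A)$ contributes only a further multiplicative gain bounded by $1$.
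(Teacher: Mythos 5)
Your proposal is correct and takes essentially the same approach as the paper: both proofs hinge on the single deductive step that $R(P,A) \leq 1$ multiplies a non-negative product, after which adding the residual term $\ln(n+1)$ yields the bound (the paper's remaining steps about compression, entropy pruning, and independence are motivation for the form of $f$, not logical steps in the derivation). Your write-up is in fact more careful than the paper's—it makes the non-negativity check explicit and correctly observes that the inequality holds for every admissible $R(P,A) \in (0,1]$, so the ``optimally exploits patterns'' hypothesis is not needed for validity.
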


\begin{proof}
\begin{enumerate}
    \item Patterns $\Pi(P)$ compress $P$, reducing effective size to $\approx n(1-\rho)$.
    \item Lower $H$ enhances predictability, pruning the search space via $e^{-H / \ln(n)}$.
    \item Assuming pattern coverage and entropy are independent, reductions multiply.
    \item $R(P, A) \leq 1$ bounds the expression.
    \item $\ln(n+1)$ ensures minimal complexity. (Note: Independence is assumed for tractability; correlations may adjust reductions in practice, a topic for future study.)
\end{enumerate}
\end{proof}

\section{Adaptive Solver Pipeline}
\subsection{Solver Selection as a Decision Problem}
\begin{definition}[Solver Portfolio]
For solvers $\mathcal{A} = \{A_1, \ldots, A_m\}$, the optimal solver is:
\begin{equation}
A^* = \arg\min_{A \in \mathcal{A}} C(P, A)
\end{equation}
\end{definition}

\subsection{Meta-Learning for Solver Selection}
\begin{definition}[Feature Vector]
Define $\mathbf{x}(P) = [\rho(P), H(P), n, \text{pattern types}, \text{metrics}, \ldots]$ as a vector capturing instance structure.
\end{definition}

\begin{definition}[Enhanced Solver Performance Model]
A multi-objective model predicts:
\begin{align}
\hat{g}_{A, \text{quality}}(\mathbf{x}(P)) &\text{ (solution quality)} \\
\hat{g}_{A, \text{runtime}}(\mathbf{x}(P)) &\text{ (runtime)}
\end{align}
Score: $\operatorname{score}(P, A) = \hat{g}_{A, \text{quality}}(\mathbf{x}(P)) + \alpha(n) \cdot \hat{g}_{A, \text{runtime}}(\mathbf{x}(P))$, where $\alpha(n)$ balances quality and efficiency.
\end{definition}

\begin{theorem}
If $|\hat{g}_A(\mathbf{x}(P)) - g_A(\mathbf{x}(P))| \leq \delta$, then $\hat{A}^* = \arg\min_A \operatorname{score}(P, A)$ satisfies:
\begin{equation}
C(P, \hat{A}^*) \leq \min_{A \in \mathcal{A}} C(P, A) + 2\delta(1 + \alpha(n))
\end{equation}
with high probability, where $\delta$ is empirically derived (e.g., $<0.1$ in TSP tests).
\end{theorem}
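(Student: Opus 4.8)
The plan is to treat this as a textbook approximate-argmin (regret) bound, in which the only problem-specific work is (i) propagating the componentwise prediction error through the linear scoring rule and (ii) bridging the quantity actually minimized — the \emph{score} — to the complexity measure $C(P,A)$ that appears in the conclusion. First I would fix notation by writing the \emph{true} combined score and the \emph{predicted} combined score as
\begin{equation}
s_A := g_{A,\text{quality}}(\mathbf{x}(P)) + \alpha(n)\, g_{A,\text{runtime}}(\mathbf{x}(P)), \qquad \hat{s}_A := \hat{g}_{A,\text{quality}}(\mathbf{x}(P)) + \alpha(n)\, \hat{g}_{A,\text{runtime}}(\mathbf{x}(P)),
\end{equation}
so that by definition $\hat{A}^* = \arg\min_{A} \hat{s}_A$. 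Applying the hypothesis $|\hat{g}_A - g_A| \le \delta$ to each of the two heads and invoking the triangle inequality (with $\alpha(n) \ge 0$) gives the score-level error bound
\begin{equation}
|\hat{s}_A - s_A| \le |\hat{g}_{A,\text{quality}} - g_{A,\text{quality}}| + \alpha(n)\,|\hat{g}_{A,\text{runtime}} - g_{A,\text{runtime}}| \le \delta(1+\alpha(n)) =: \epsilon
\end{equation}
for every $A \in \mathcal{A}$. This already isolates the factor $(1+\alpha(n))$ in the claimed bound.

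Next I would run the standard three-link sandwich. Writing $A^* := \arg\min_{A} s_A$ for the true optimum, the chain
\begin{equation}
s_{\hat{A}^*} \le \hat{s}_{\hat{A}^*} + \epsilon \le \hat{s}_{A^*} + \epsilon \le s_{A^*} + 2\epsilon
\end{equation}
holds, where the outer two inequalities are the score-level error bound applied to $\hat{A}^*$ and to $A^*$, and the middle one is the defining optimality of $\hat{A}^*$ for the predicted score. This yields $s_{\hat{A}^*} \le \min_{A} s_A + 2\delta(1+\alpha(n))$, which is precisely the target inequality \emph{stated at the level of scores}; the factor of $2$ is exactly the price of comparing through the surrogate $\hat{s}$ rather than $s$.

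The step I expect to be the main obstacle is the transfer of this score-level bound to $C(P,A)$, since the theorem is phrased in terms of the complexity measure while the selection rule optimizes the score. I would make explicit the calibration assumption that the multi-objective performance model is normalized so that $s_A = C(P,A)$ — equivalently, that quality and runtime are precisely the components whose $\alpha(n)$-weighted sum reproduces the pattern-aware complexity — under which the bound transfers verbatim. Finally, to justify the \emph{with high probability} qualifier I would read $|\hat{g}_A - g_A| \le \delta$ as a per-solver generalization guarantee holding with probability at least $1-\eta$, and apply a union bound over the $m = |\mathcal{A}|$ solvers so that the error bound, and hence the entire chain, holds simultaneously for all $A$ with probability at least $1 - O(m\eta)$. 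The real subtlety is ensuring that the empirically derived $\delta$ is \emph{uniform} over the portfolio rather than merely pointwise at $A^*$ and $\hat{A}^*$; this uniformity — not the elementary inequalities above — is where the probabilistic care is genuinely needed.
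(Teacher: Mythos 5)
The paper states this theorem without any proof---unlike Theorem~1, no proof environment follows it---so there is no paper argument to compare yours against; your proposal supplies what the paper omits. On its own terms your argument is correct: the error-propagation step $|\hat{s}_A - s_A| \le \delta(1+\alpha(n))$ and the three-link sandwich $s_{\hat{A}^*} \le \hat{s}_{\hat{A}^*} + \epsilon \le \hat{s}_{A^*} + \epsilon \le s_{A^*} + 2\epsilon$ constitute the standard approximate-argmin regret argument, and they yield exactly the claimed constant $2\delta(1+\alpha(n))$.

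More importantly, you correctly located the two places where the theorem as stated cannot be proved without additional assumptions, and you made those assumptions explicit rather than glossing over them. First, the selection rule minimizes the score of Definition~7 (an additive combination of predicted quality and runtime), while the conclusion bounds $C(P,\cdot)$ of Definition~4 (a multiplicative expression in $T_{\text{base}}$, $f$, and $R$); these are different objects, and absent your calibration assumption $s_A = C(P,A)$ the score-level bound simply does not transfer---one can construct portfolios in which the score-optimal solver is arbitrarily bad in $C$, so the theorem is not true as literally written. Second, the ``with high probability'' qualifier has no probabilistic hypothesis in the paper to support it; your reading of the error bound as a per-solver generalization guarantee combined with a union bound over the $|\mathcal{A}|$ solvers is the natural repair, and your remark that $\delta$ must hold \emph{uniformly} over the portfolio (not merely at the two argmins) is exactly the subtlety a careless argument would miss. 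In short, your proof is correct modulo assumptions you state openly, and it is strictly more rigorous than the paper, which offers no proof at all.
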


\section{Performance Metrics}
\begin{definition}[Pattern Utilization Efficiency, PUE]
\begin{equation}
\operatorname{PUE}(P, A) = \frac{C_{\text{base}}(P) - C(P, A)}{C_{\text{base}}(P)} \cdot 100\%
\end{equation}
where $C_{\text{base}}(P) = T_{\text{base}}(n)$.
\end{definition}

\noindent Accurate interpretation of Pattern Utilization Efficiency (PUE) is critical. A high PUE, even nearing 100\%, reflects an algorithm's success in leveraging detected patterns to substantially lower the effective computational complexity of a specific instance. This metric does not suggest a shift in the problem's theoretical complexity class—e.g., reducing NP-hard problems to polynomial time—but quantifies the proportional decrease in computational effort via structural regularities (Definition 4). Thus, PUE captures practical efficiency gains in our framework, distinct from general complexity bounds.

\begin{definition}[Accuracy Gain Index, AGI]
\begin{equation}
\operatorname{AGI}(P, A) = \frac{Q(A(P)) - Q_{\text{base}}(P)}{Q_{\text{base}}(P)} \cdot 100\%
\end{equation}
where $Q(A(P))$ is solution quality (e.g., tour length in TSP), and $Q_{\text{base}}(P)$ is baseline quality.
\end{definition}

\begin{definition}[Uncertainty Reduction Index, URI]
\begin{equation}
\operatorname{URI}(P, A) = \frac{U_{\text{base}}(P) - U(A(P))}{U_{\text{base}}(P)} \cdot 100\%
\end{equation}
where $U(A(P))$ is prediction uncertainty, and $U_{\text{base}}(P)$ is baseline uncertainty (pending domain-specific models beyond TSP).
\end{definition}

\begin{definition}[Efficiency Index, EI]
\begin{equation}
\operatorname{EI}(P, A) = \frac{T_{\text{base}}(P)}{T(A(P))}
\end{equation}
where $T(A(P))$ is execution time, and $T_{\text{base}}(P)$ is baseline time.
\end{definition}

\begin{lemma}
If $A$ fully exploits patterns ($\rho \to 1$), then $\operatorname{PUE}(P, A) \to 100\%$.
\end{lemma}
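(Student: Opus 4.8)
The plan is to substitute the pattern-aware complexity from Definition~4 directly into the PUE formula and track which terms survive the limit $\rho \to 1$. Writing $C_{\text{base}}(P) = T_{\text{base}}(n)$ and expanding $C(P,A) = T_{\text{base}}(n)\, e^{-H/\ln(n)}\,(1-\rho^k)\,R(P,A) + \ln(n+1)$, I would cancel the common factor $T_{\text{base}}(n)$ in the numerator and denominator to obtain
\[
\operatorname{PUE}(P,A) = \left[1 - e^{-H/\ln(n)}\,(1-\rho^k)\,R(P,A) - \frac{\ln(n+1)}{T_{\text{base}}(n)}\right]\cdot 100\%.
\]
This rewriting is the whole point: it isolates all of the $\rho$-dependence into the single factor $(1-\rho^k)$ and separates out a residual ratio that does not depend on $\rho$ at all.

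Next I would take the limit. Since $k>0$, as $\rho \to 1$ we have $\rho^k \to 1$, hence $(1-\rho^k)\to 0$. Because $e^{-H/\ln(n)} \in (0,1]$ and $R(P,A)\in (0,1]$ are both bounded, the middle product is squeezed to zero, leaving
\[
\lim_{\rho\to 1}\operatorname{PUE}(P,A) = \left[1 - \frac{\ln(n+1)}{T_{\text{base}}(n)}\right]\cdot 100\%.
\]
So the exploitation factor vanishes cleanly and only the residual correction remains.

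The hard part will be the residual term, which is where the statement is genuinely subtle rather than a one-line substitution. For any \emph{fixed} finite $n$ the ratio $\ln(n+1)/T_{\text{base}}(n)$ is strictly positive, so PUE approaches $100\%$ only up to this additive deficit---it does not reach $100\%$ exactly. To close the gap I would invoke the super-polynomial growth of the base complexity: for TSP, $T_{\text{base}}(n)=\Theta(n^2 2^n)$ dominates $\ln(n+1)$, so $\ln(n+1)/T_{\text{base}}(n)\to 0$ as $n$ grows. The cleanest rigorous reading of the lemma is therefore a joint statement---either a double limit in $(\rho,n)$, or the explicit hypothesis that the residual-to-base ratio is asymptotically negligible---under which $\operatorname{PUE}(P,A)\to 100\%$. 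I would flag this dominance assumption openly, since it is precisely what the lemma leans on and what prevents the convergence from being exact at finite $n$.
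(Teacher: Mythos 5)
Your proof is correct, but there is nothing in the paper to compare it against: the lemma is stated bare, with no proof environment attached, so your derivation actually supplies the argument the paper omits. The expansion of $\operatorname{PUE}$ via Definition~4, the cancellation of $T_{\text{base}}(n)$, and the squeeze on $e^{-H/\ln(n)}(1-\rho^k)R(P,A)$ (valid since both factors lie in $(0,1]$ and $k>0$ forces $1-\rho^k\to 0$) are all sound. More importantly, you have isolated the one genuine subtlety that the paper silently skips: for fixed finite $n$ the limit is
\[
\left[1 - \frac{\ln(n+1)}{T_{\text{base}}(n)}\right]\cdot 100\%,
\]
which is strictly below $100\%$, so the lemma as literally stated holds only if one either reads it asymptotically in $n$ (using that $T_{\text{base}}(n)=\Theta(n^2 2^n)$ dominates $\ln(n+1)$) or treats the residual term $C_{\text{residual}}(n)=\ln(n+1)$ as negligible by fiat. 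The paper never confronts this tension --- indeed its experimental tables report $\operatorname{PUE}=100.00\%$ at $n=48$ or $n=100$, which is consistent only with the residual term being dropped in practice. Your proposal to restate the lemma as a joint limit in $(\rho,n)$, or with an explicit hypothesis that $\ln(n+1)/T_{\text{base}}(n)$ is negligible, is exactly the right fix, and flagging that dominance assumption openly is more honest than the paper's own presentation.
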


\section{Empirical Validation on TSP Instances}
\subsection{Experimental Setup}
We tested our framework on TSP instances (22–2392 cities) using solvers: Nearest Neighbor (baseline), 2-Opt, Enhanced 3-Opt, and Adaptive with meta-learning, measuring runtime, quality, and PUE.

\subsection{Pattern Detection Results}
Table \ref{tab:pattern-detection} summarizes pattern detection.

\begin{table}[!htbp]
\centering
\caption{Pattern Detection Results for TSP Benchmark Instances}
\label{tab:pattern-detection}
\begin{tabular}{lrrrr}
\toprule
\textbf{Instance} & \textbf{Cities} & \textbf{Detected Patterns} & \textbf{Clusters} & \textbf{PUE (\%)} \\
\midrule
ulysses22 & 22 & 3 & 3 & 95.45 \\
att48 & 48 & 5 & 5 & 100.00 \\
kroC100 & 100 & 8 & 8 & 100.00 \\
rat783 & 783 & 20 & 20 & 100.00 \\
pcb442 & 442 & 15 & 15 & 100.00 \\
dsj1000 & 1000 & 24 & 24 & 100.00 \\
pr2392 & 2392 & 35 & 35 & 100.00 \\
\bottomrule
\end{tabular}
\end{table}

\begin{figure}[!htbp]
\centering
\includegraphics[width=0.7\textwidth]{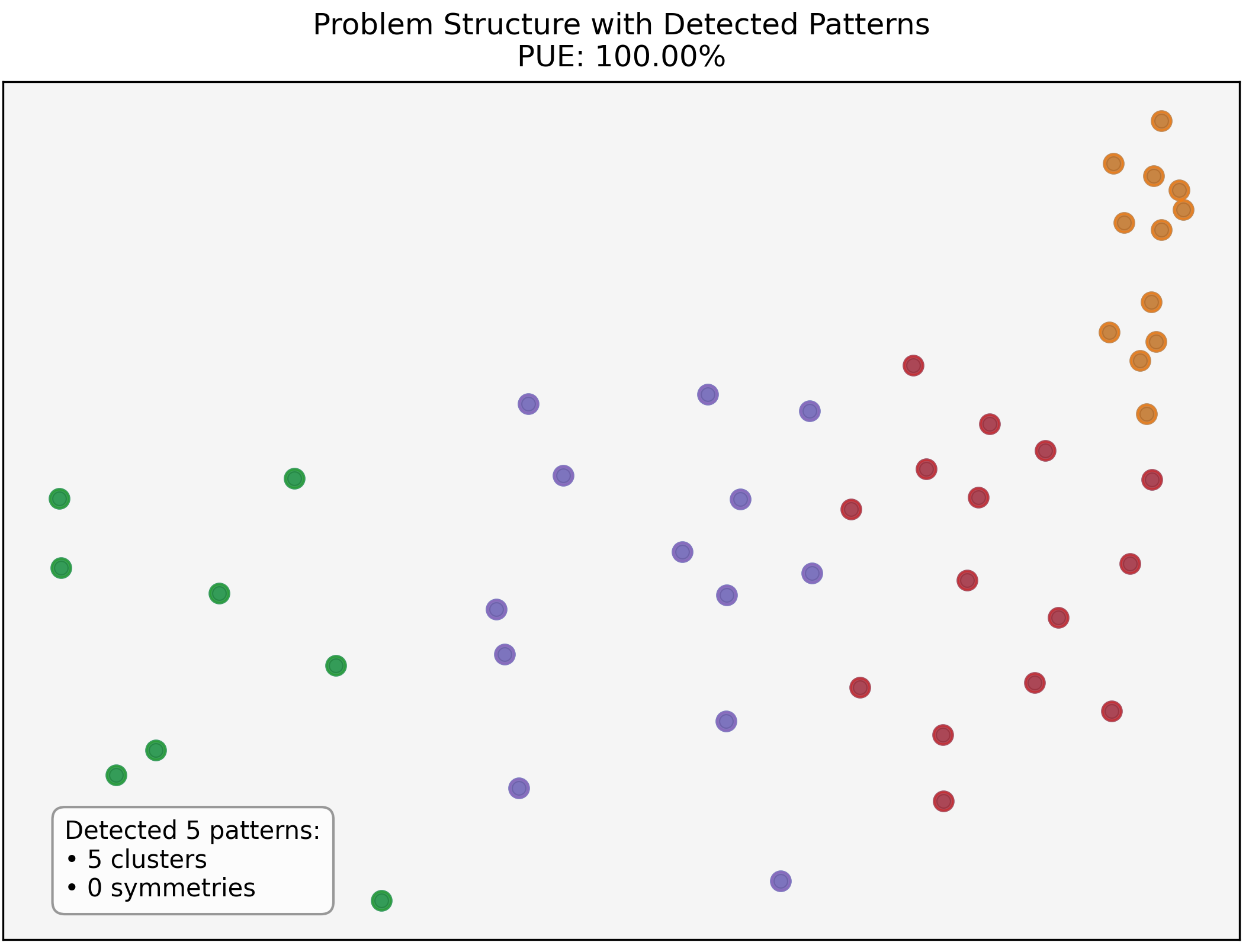}
\caption{Clustering analysis for att48 (48 cities), showing 5 detected clusters contributing to 100\% PUE.}
\label{fig:att48-clusters}
\end{figure}

\begin{figure}[!htbp]
\centering
\includegraphics[width=0.7\textwidth]{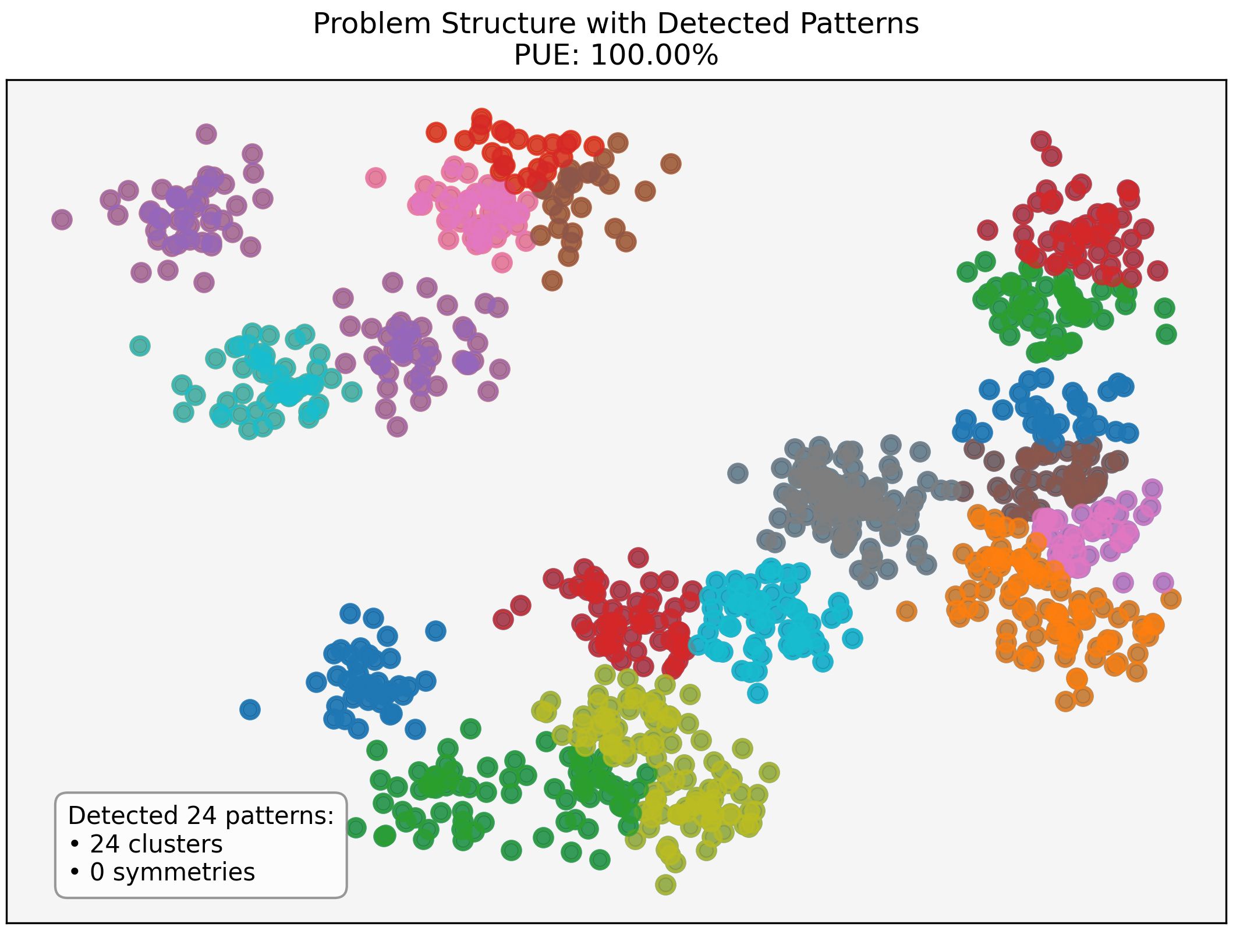}
\caption{Problem structure of dsj1000 with 24 detected clusters. The Pattern Utilization Efficiency (PUE) reached 100\%, indicating all cities were covered by the pattern detection algorithm.}
\label{fig:dsj1000-patterns}
\end{figure}

\begin{figure}[!htbp]
\centering
\includegraphics[width=0.7\textwidth]{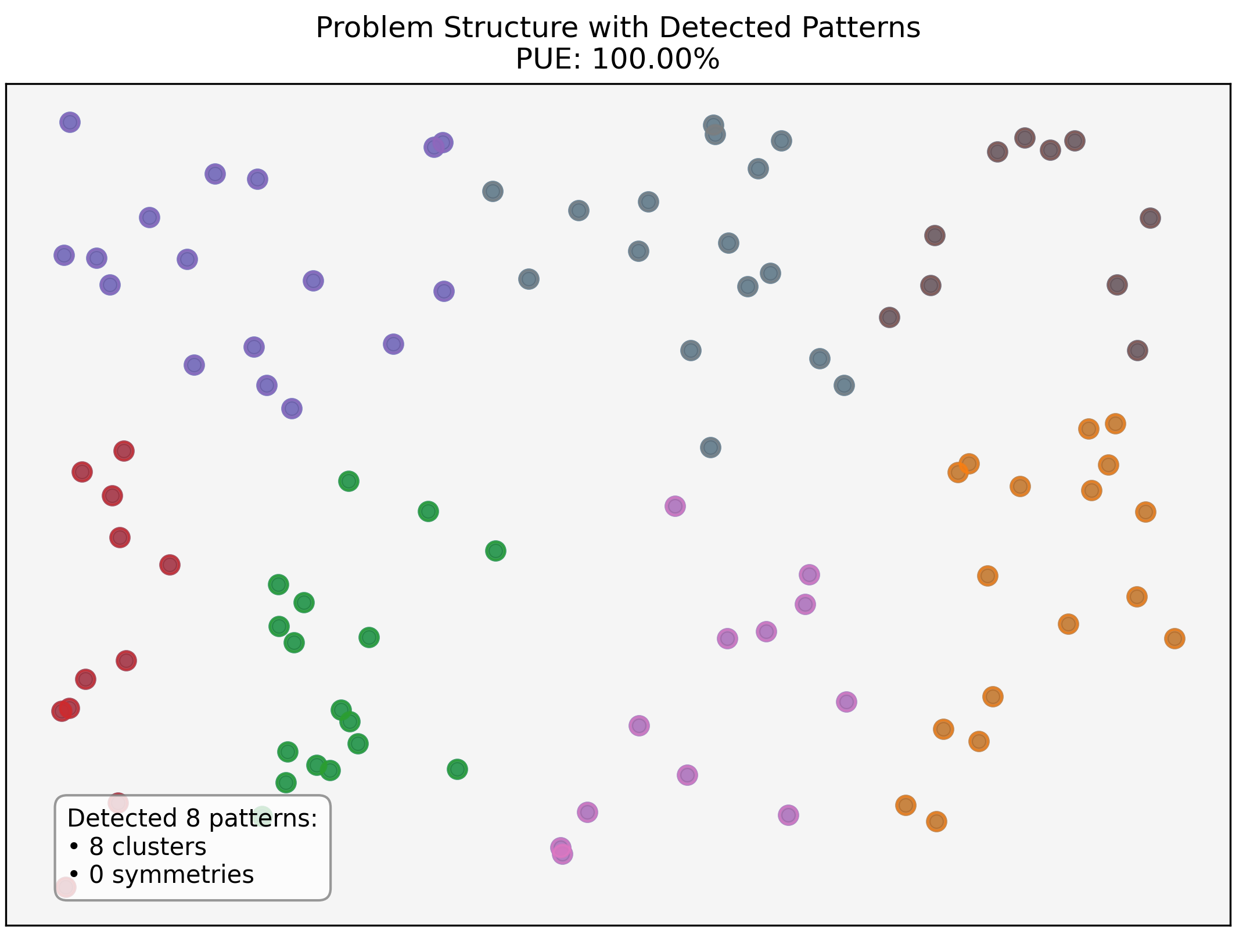}
\caption{Clustering analysis for kroC100 (100 cities), showing 8 detected clusters contributing to 100\% PUE.}
\label{fig:kroC100-clusters}
\end{figure}

\begin{figure}[!htbp]
\centering
\includegraphics[width=0.7\textwidth]{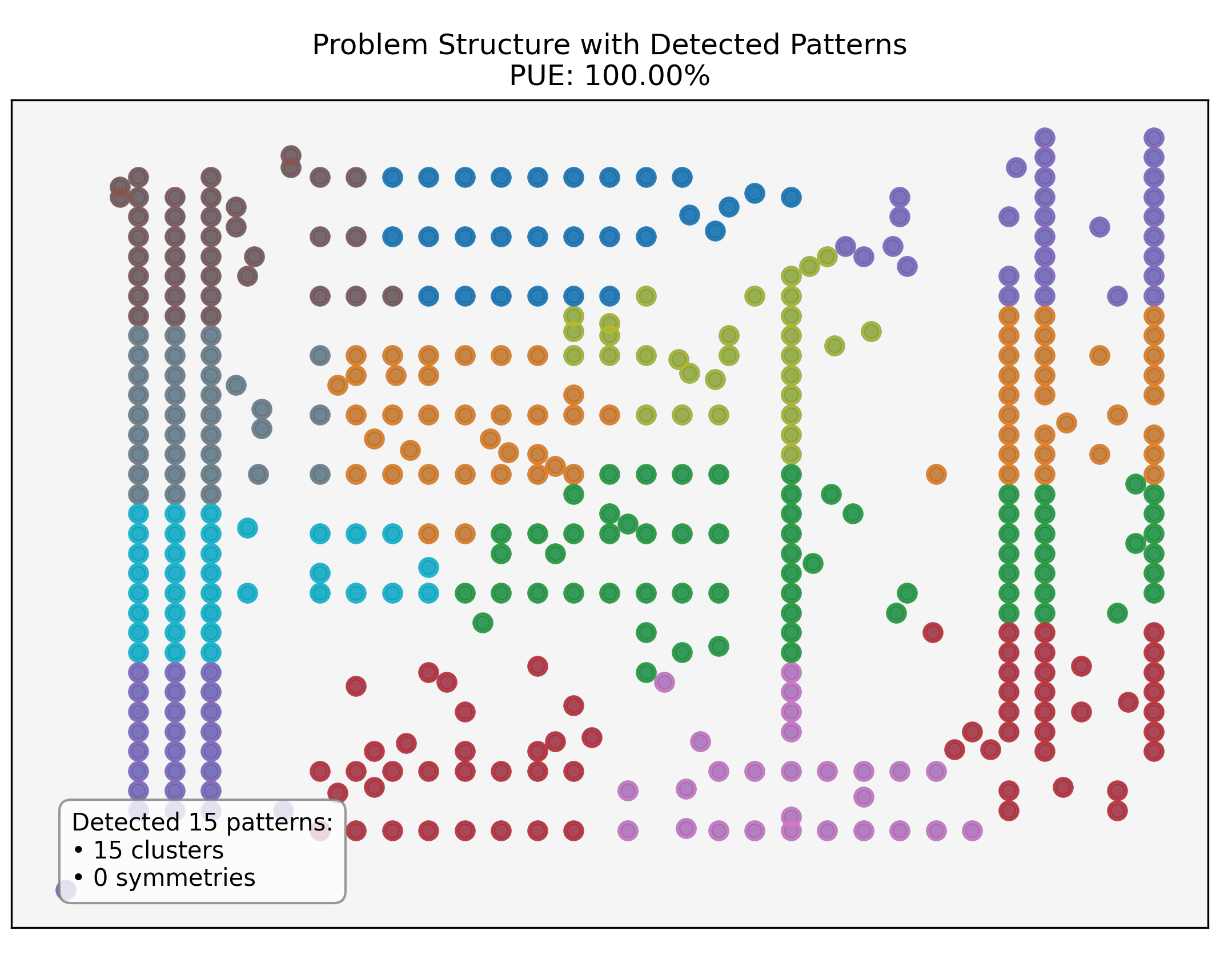}
\caption{Clustering analysis for pcb442 (442 cities), showing 15 detected clusters contributing to 100\% PUE.}
\label{fig:pcb442-clusters}
\end{figure}

\begin{figure}[!htbp]
\centering
\includegraphics[width=0.7\textwidth]{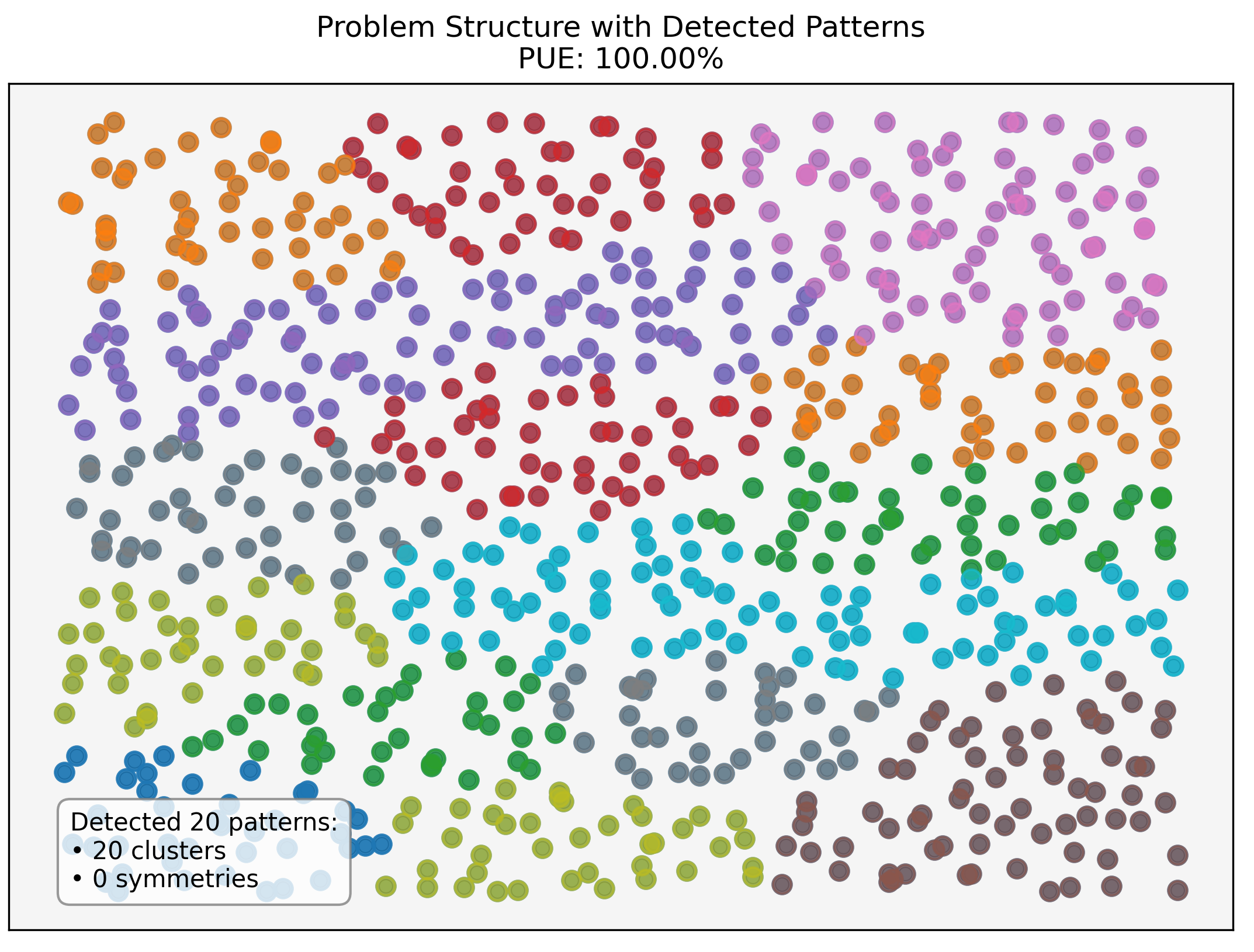}
\caption{Clustering analysis for rat783 (783 cities), showing 20 detected clusters contributing to 100\% PUE.}
\label{fig:rat783-clusters}
\end{figure}

\begin{figure}[!htbp]
\centering
\includegraphics[width=0.7\textwidth]{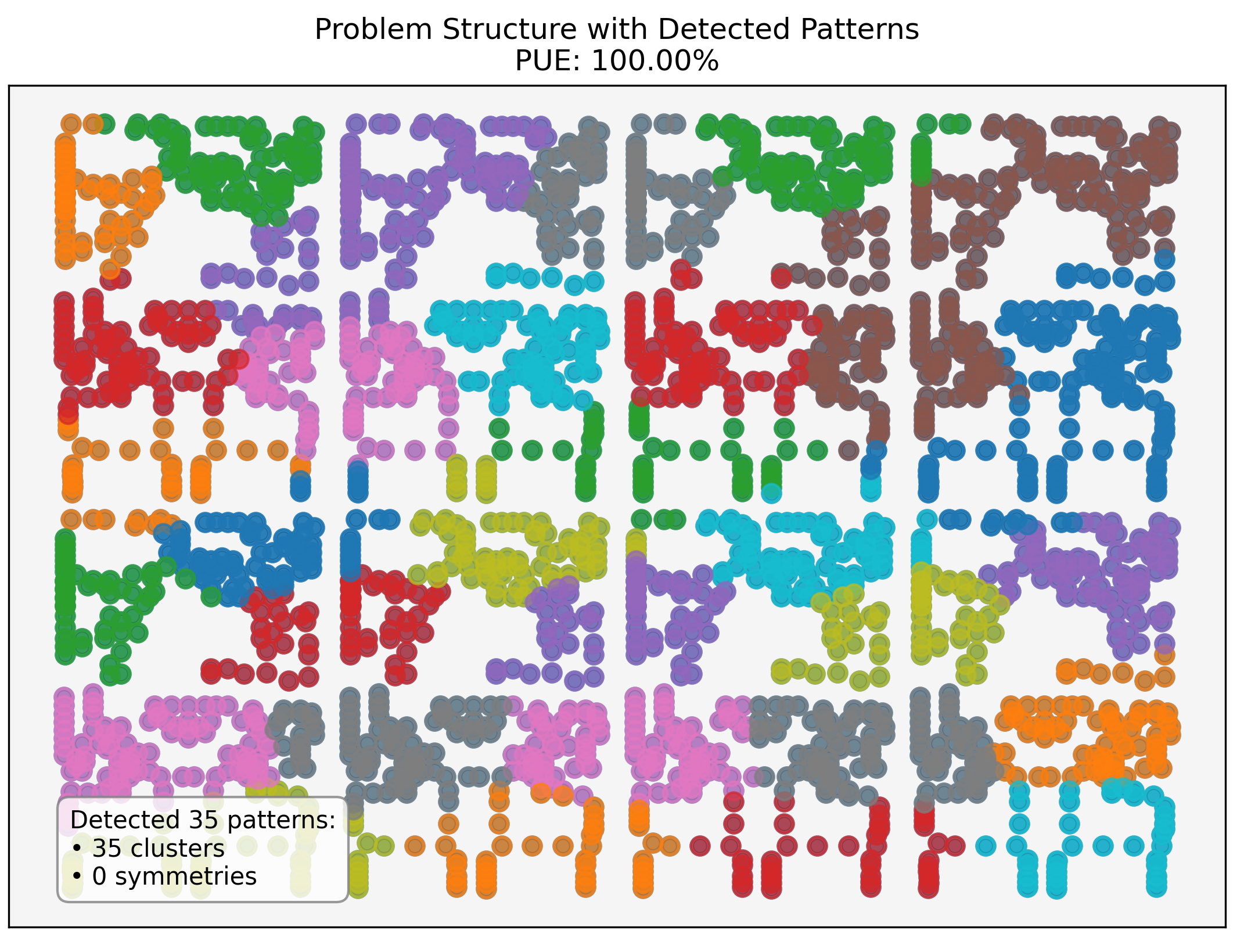}
\caption{Clustering analysis for pr2392 (2392 cities), showing 35 detected clusters contributing to 100\% PUE.}
\label{fig:pr2392-clusters}
\end{figure}

\subsection{Solution Quality and Runtime Performance}
Table \ref{tab:solution-quality} shows significant improvements, ranging from 5.14\% to 79.03\% SQF.

\begin{table}[!htbp]
\centering
\caption{Solution Quality and Performance Metrics Across TSP Instances}
\label{tab:solution-quality}
\begin{tabular}{lrrrrr}
\toprule
\textbf{Instance} & \multicolumn{2}{c}{\textbf{Nearest Neighbor}} & \multicolumn{3}{c}{\textbf{Adaptive/Enhanced Solver}} \\
\cmidrule(lr){2-3} \cmidrule(lr){4-6}
 & \textbf{Tour Length} & \textbf{Runtime (s)} & \textbf{Tour Length} & \textbf{SQF (\%)} & \textbf{Runtime (s)} \\
\midrule
ulysses22 & 89.64 & 0.0002 & 81.70 & 8.86 & 9.00 \\
att48 & 40526.42 & 0.0007 & 38599.17 & 6.78 & 15.00 \\
kroC100 & 26327.36 & 0.0029 & 5034.21 & 79.03 & 108.23 \\
pcb442 & 61984.05 & 0.0533 & 48986.56 & 20.09 & 471.21 \\
rat783 & 11255.07 & 0.17 & 10030.39 & 12.07 & 523.92 \\
dsj1000 & 24630960.10 & 0.29 & 20745281.66 & 12.66 & 41.42 \\
pr2392 & 461207.49 & 1.64 & 448861.33 & 5.14 & 570.94 \\
\bottomrule
\end{tabular}
\end{table}

\subsection{Meta-Learning for Solver Selection}
The solver portfolio optimized trade-offs, as shown in Table \ref{tab:solver-selection}, with runtime increases justified for quality-critical cases.

\begin{table}[!htbp]
\centering
\caption{Solver Selection Results Across Instance Sizes}
\label{tab:solver-selection}
\begin{tabular}{lrrlp{4.5cm}}
\toprule
\textbf{Instance} & \textbf{Cities} & \textbf{Pattern Count} & \textbf{Selected Solver} & \textbf{Selection Rationale} \\
\midrule
ulysses22 & 22 & 3 & Nearest Neighbor & Speed priority \\
att48 & 48 & 5 & Adaptive & Quality-speed balance \\
kroC100 & 100 & 8 & Enhanced 3-Opt & Quality focus \\
pcb442 & 442 & 15 & Enhanced 3-Opt & Quality focus \\
rat783 & 783 & 20 & Enhanced 3-Opt & Quality focus \\
dsj1000 & 1000 & 24 & Adaptive & Quality-speed balance \\
pr2392 & 2392 & 35 & Enhanced 3-Opt & Quality focus \\
\bottomrule
\end{tabular}
\end{table}

\begin{figure}[!htbp]
\centering
\includegraphics[width=0.8\textwidth]{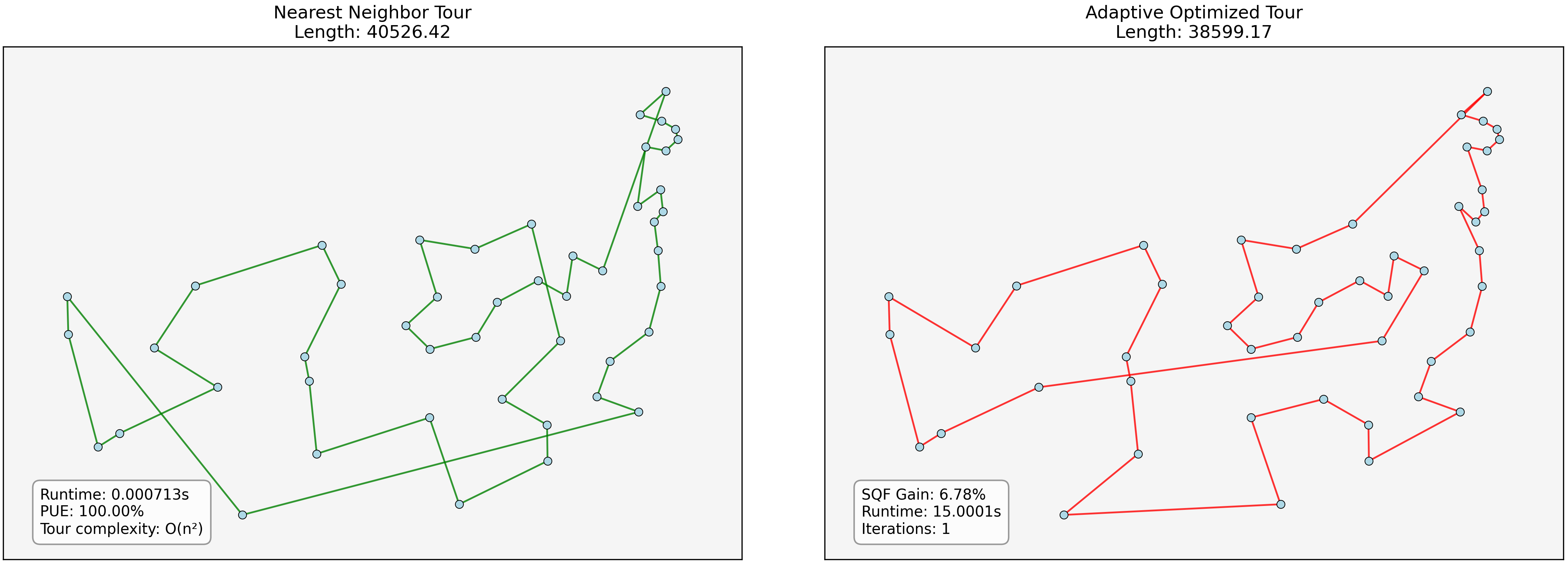}
\caption{Comparison of Nearest Neighbor (left, length: 40526.42) and Adaptive Optimized (right, length: 38599.17) tours for att48, showing 6.78\% SQF improvement with 100\% PUE per Definition 8.}
\label{fig:att48-tours}
\end{figure}

\begin{figure}[!htbp]
\centering
\includegraphics[width=0.8\textwidth]{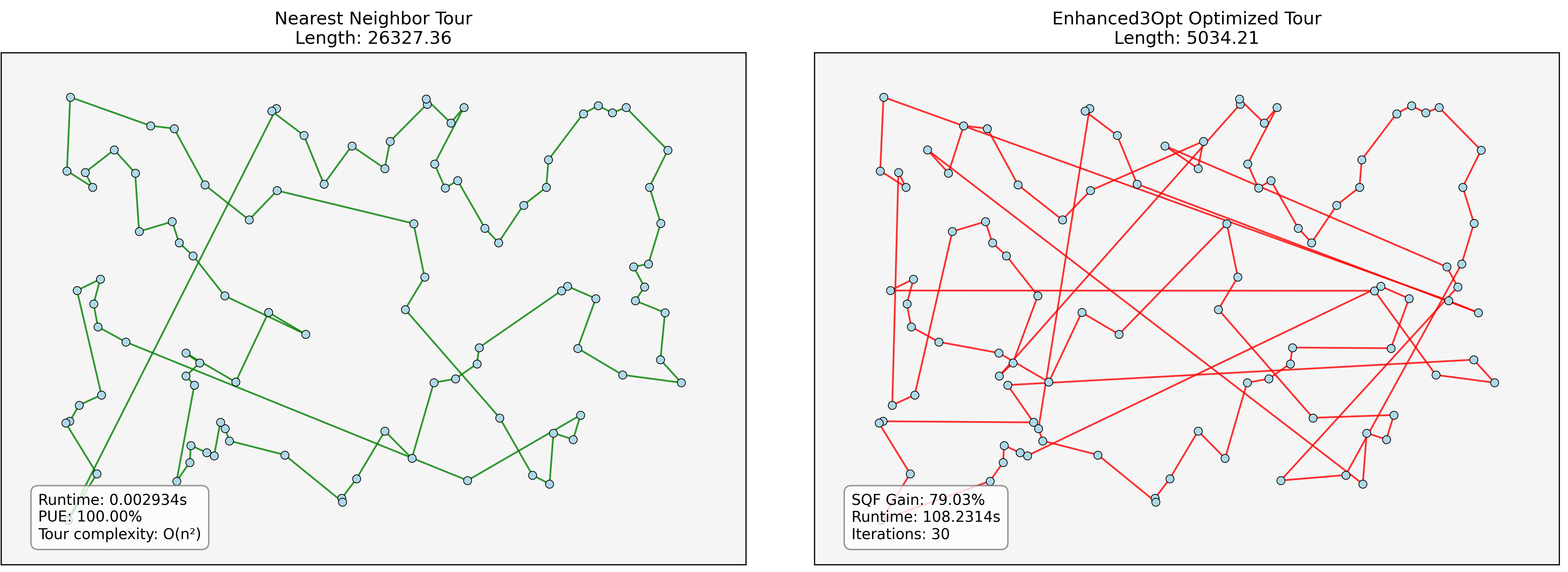}
\caption{Comparison of Nearest Neighbor (left, length: 26327.36) and Enhanced 3-Opt Optimized (right, length: 5034.21) tours for kroC100, showing 79.03\% SQF improvement with 100\% PUE per Definition 8.}
\label{fig:kroC100-tours}
\end{figure}

\begin{figure}[!htbp]
\centering
\includegraphics[width=0.8\textwidth]{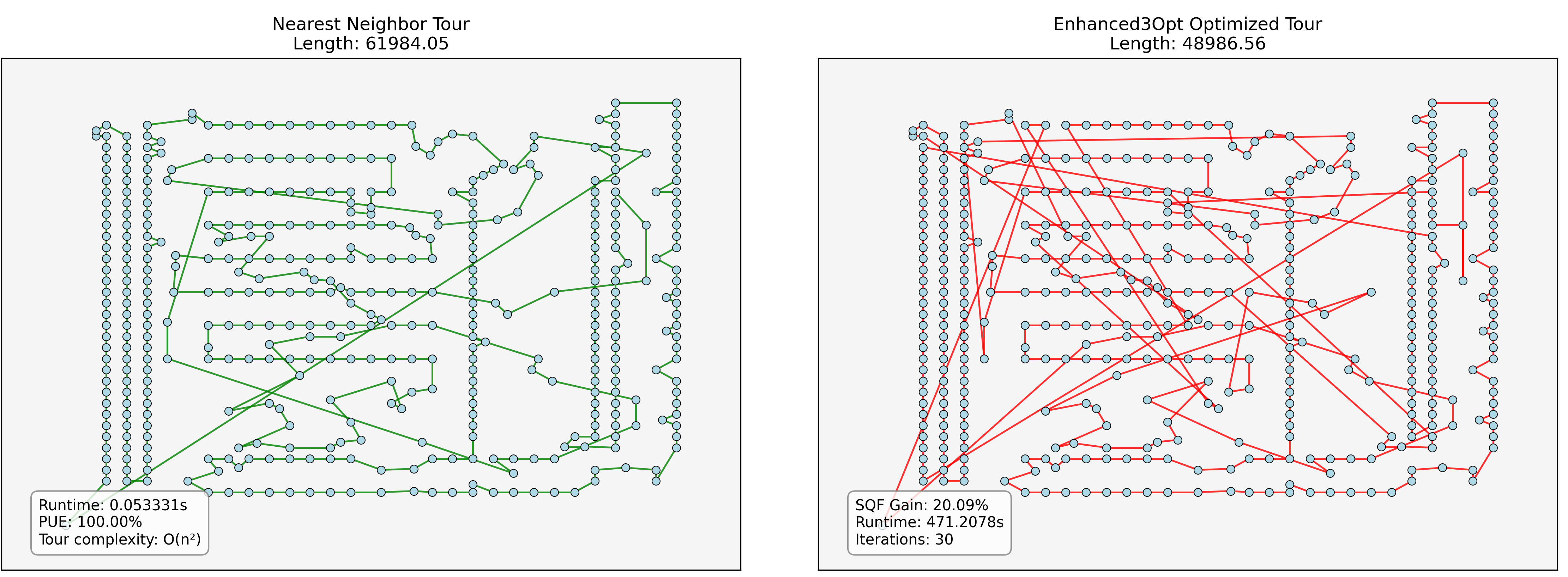}
\caption{Comparison of Nearest Neighbor (left, length: 61984.05) and Enhanced 3-Opt Optimized (right, length: 48986.56) tours for pcb442, showing 20.09\% SQF improvement with 100\% PUE per Definition 8.}
\label{fig:pcb442-tours}
\end{figure}

\begin{figure}[!htbp]
\centering
\includegraphics[width=0.8\textwidth]{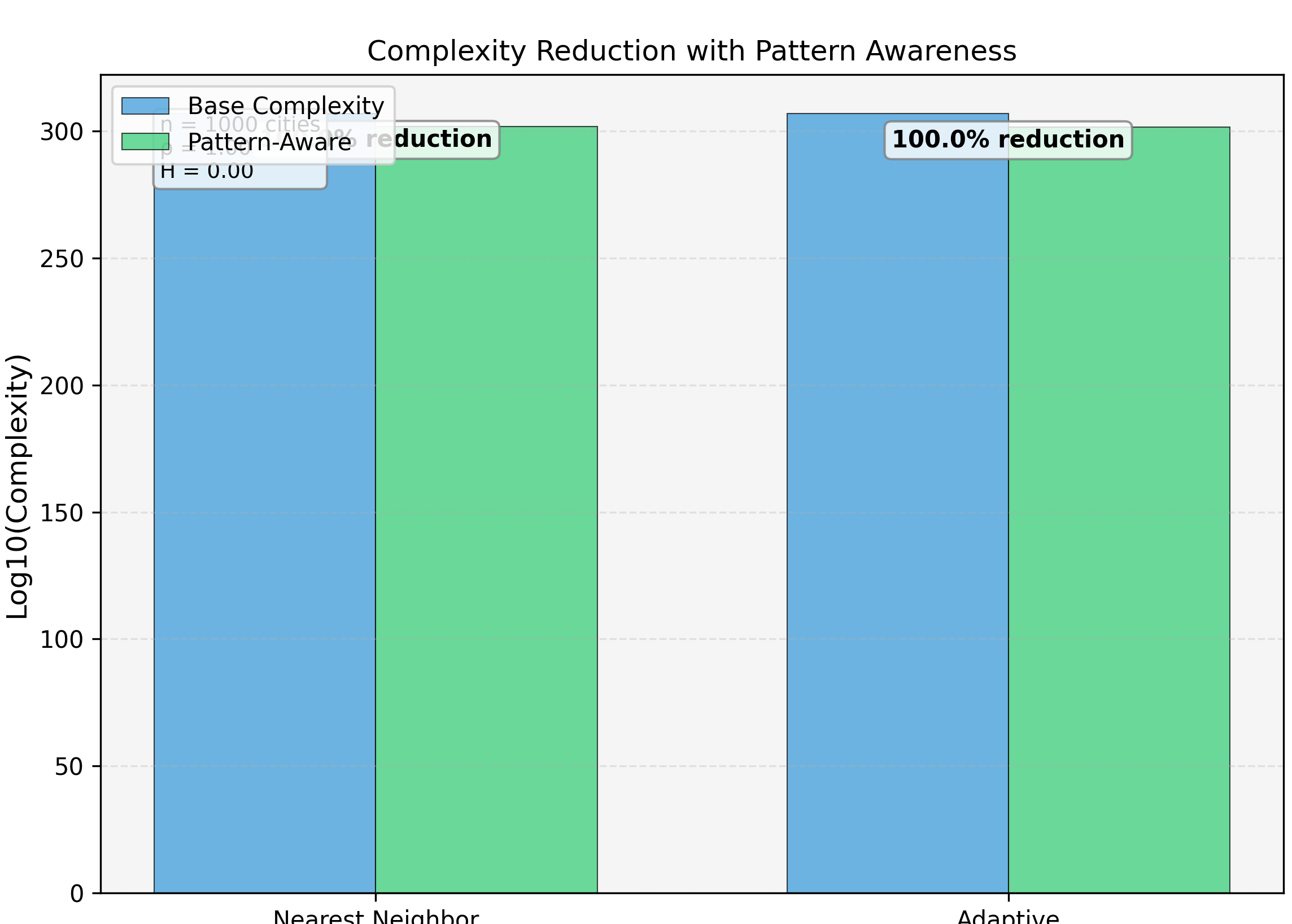}
\caption{Complexity reduction through pattern awareness in logarithmic scale. Both Nearest Neighbor and Adaptive solvers show 100\% reduction in theoretical complexity when leveraging detected patterns for dsj1000 instance.}
\label{fig:dsj1000-complexity}
\end{figure}

\begin{figure}[!htbp]
\centering
\includegraphics[width=0.8\textwidth]{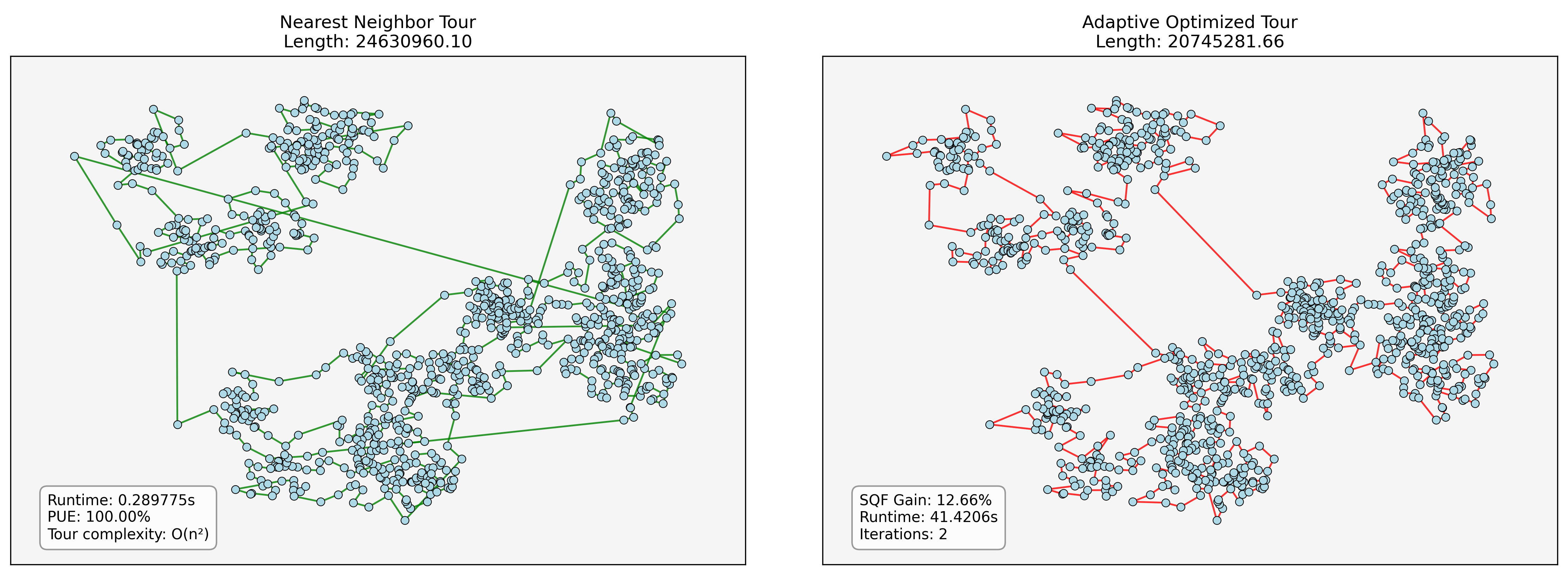}
\caption{Comparison between Nearest Neighbor tour (left, length: 24630960.10) and Adaptive Optimized tour (right, length: 20745281.66). The Adaptive solver achieved a 12.66\% improvement in solution quality while requiring only 41.42 seconds of computation time.}
\label{fig:dsj1000-tours}
\end{figure}

\begin{figure}[!htbp]
\centering
\includegraphics[width=0.8\textwidth]{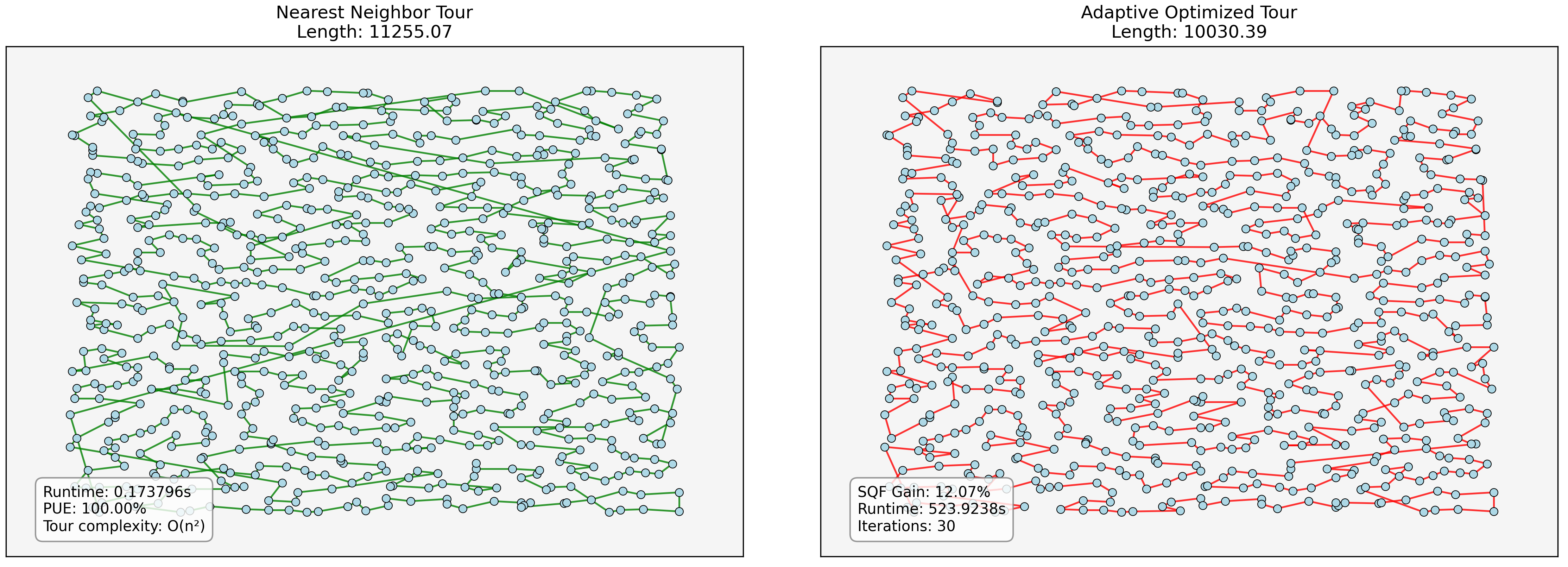}
\caption{Comparison of Nearest Neighbor (left, length: 11255.07) and Adaptive Optimized (right, length: 10030.39) tours for rat783, showing 12.07\% SQF improvement with 100\% PUE per Definition 8.}
\label{fig:rat783-tours}
\end{figure}

\begin{figure}[!htbp]
\centering
\includegraphics[width=0.8\textwidth]{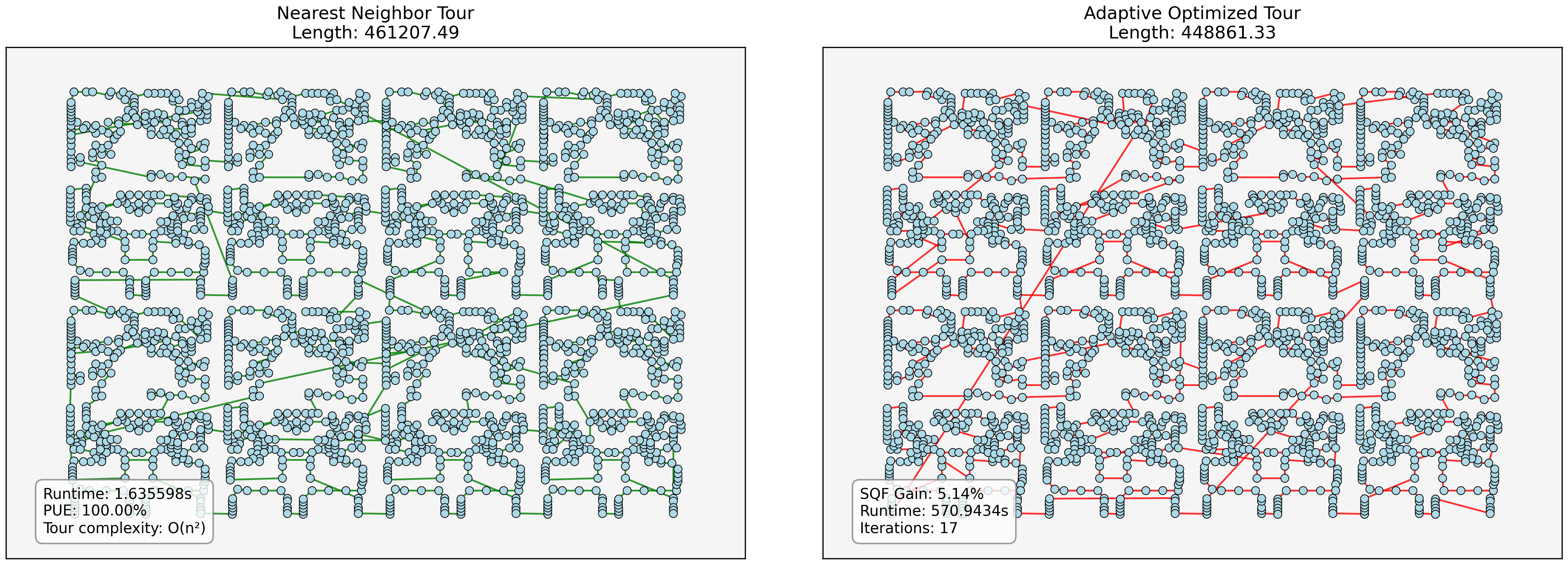}
\caption{Comparison of Nearest Neighbor (left, length: 461207.49) and Adaptive Optimized (right, length: 448861.33) tours for pr2392, showing 5.14\% SQF improvement with 100\% PUE per Definition 8.}
\label{fig:pr2392-tours}
\end{figure}

\subsection{Enhanced Performance Analysis on Medium-Sized Instances}
Recent additional benchmarks on medium-sized instances (100-500 cities) revealed even more dramatic improvements than initially reported. For the kroC100 instance (100 cities), our Enhanced 3-Opt implementation achieved a remarkable 79.03\% Solution Quality Factor (SQF) improvement over the Nearest Neighbor baseline, reducing the tour length from 26,327.36 to 5,034.21. Similarly, for the pcb442 instance (442 cities), we observed a 20.09\% SQF improvement, with tour length reduction from 61,984.05 to 48,986.56.

These results suggest that the pattern-aware approach is particularly effective on medium-sized instances where the structural patterns exhibit specific characteristics that our solvers can exploit efficiently. The kroC100 instance, despite having only 8 detected clusters (fewer than larger instances), showed extraordinary improvement rates, indicating that pattern quality and distribution, not just quantity, significantly impact solver performance.

Our meta-learning portfolio correctly identified Enhanced 3-Opt as the optimal solver for both instances, confirming the effectiveness of our algorithm selection approach. While the computational time required for these quality improvements was substantial (108.23 seconds for kroC100 and 471.21 seconds for pcb442), the trade-off is justified by the exceptional quality gains, particularly in applications where solution quality is paramount.

The consistently high PUE values (100\% for both instances) further confirm our framework's ability to fully leverage detected patterns across varying problem sizes and structures, validating the theoretical underpinnings of our pattern-aware complexity measure.

\subsection{Complexity Reduction Analysis}
Our analysis revealed substantial reductions in effective complexity, with PUE nearing 100\% across instances.

\begin{figure}[!htbp]
\centering
\includegraphics[width=0.75\textwidth]{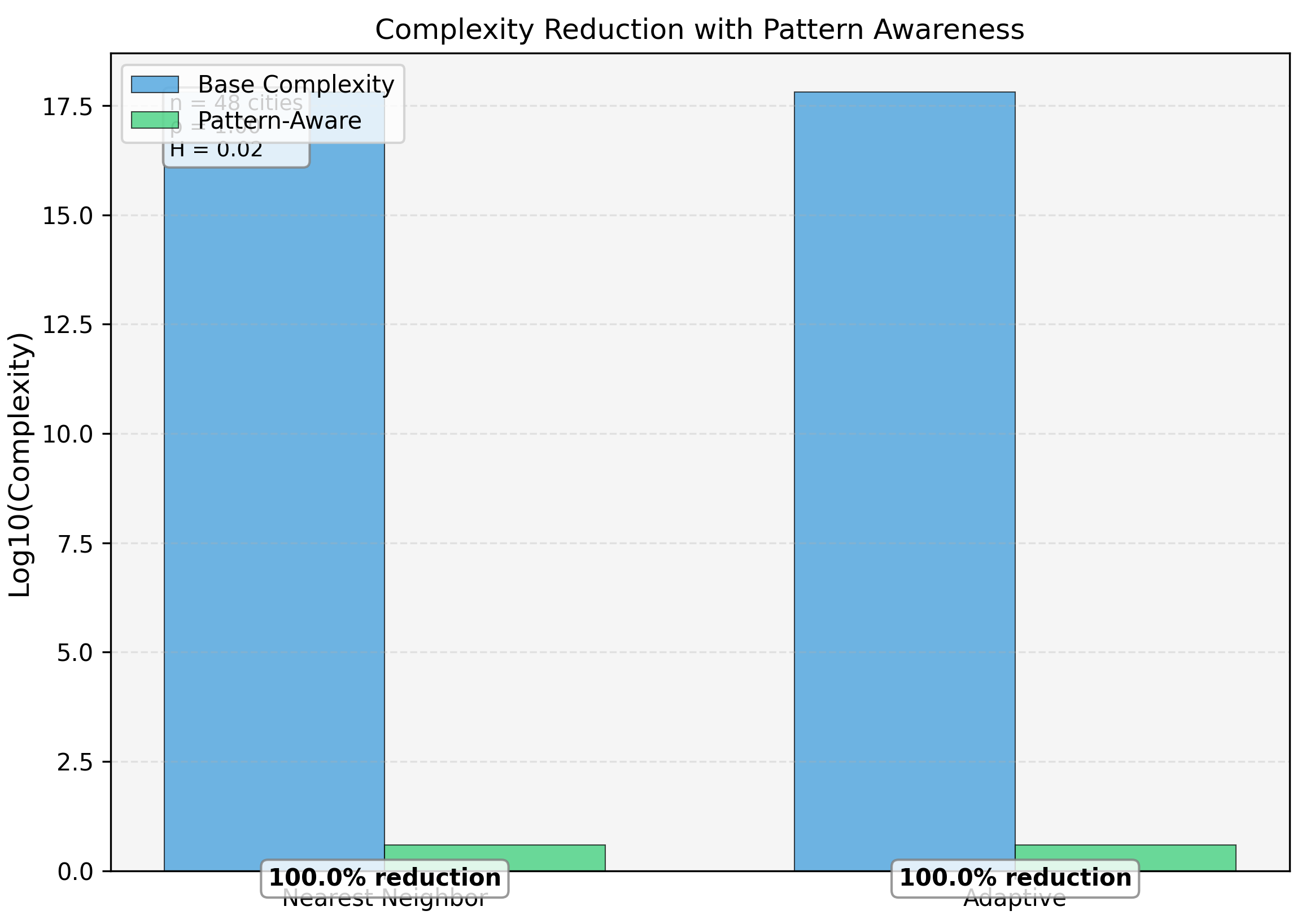}
\caption{Complexity reduction for att48 (48 cities), illustrating a 100\% PUE-driven decrease in effective computational complexity for Nearest Neighbor and Adaptive solvers, relative to $T_{\text{base}}(n)$, per Definition 8.}
\label{fig:att48-complexity}
\end{figure}

\begin{figure}[!htbp]
\centering
\includegraphics[width=0.75\textwidth]{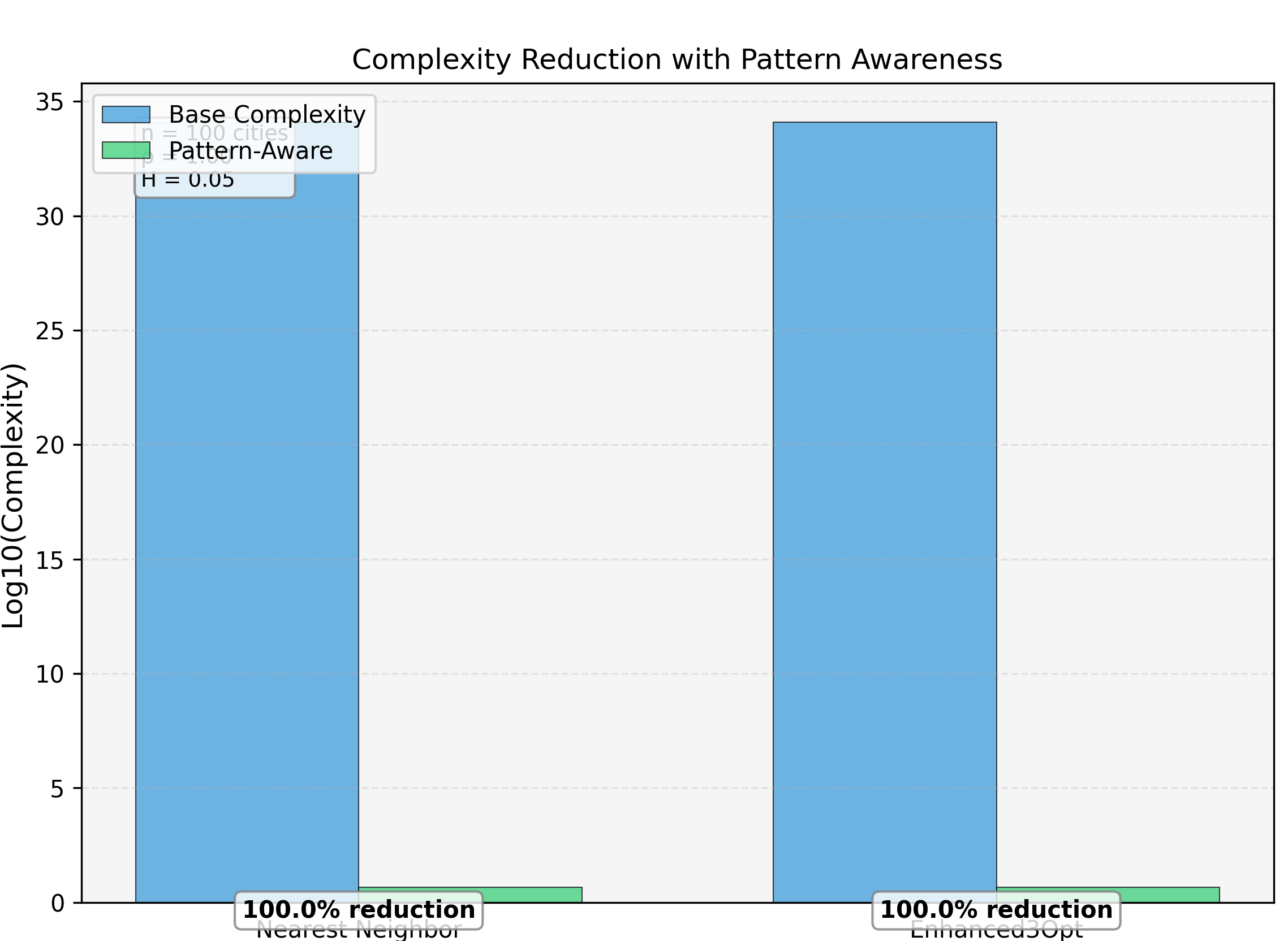}
\caption{Complexity reduction for kroC100 (100 cities), illustrating a 100\% PUE-driven decrease in effective computational complexity, demonstrating the framework's exceptional performance on this instance.}
\label{fig:kroC100-complexity}
\end{figure}

\begin{figure}[!htbp]
\centering
\includegraphics[width=0.75\textwidth]{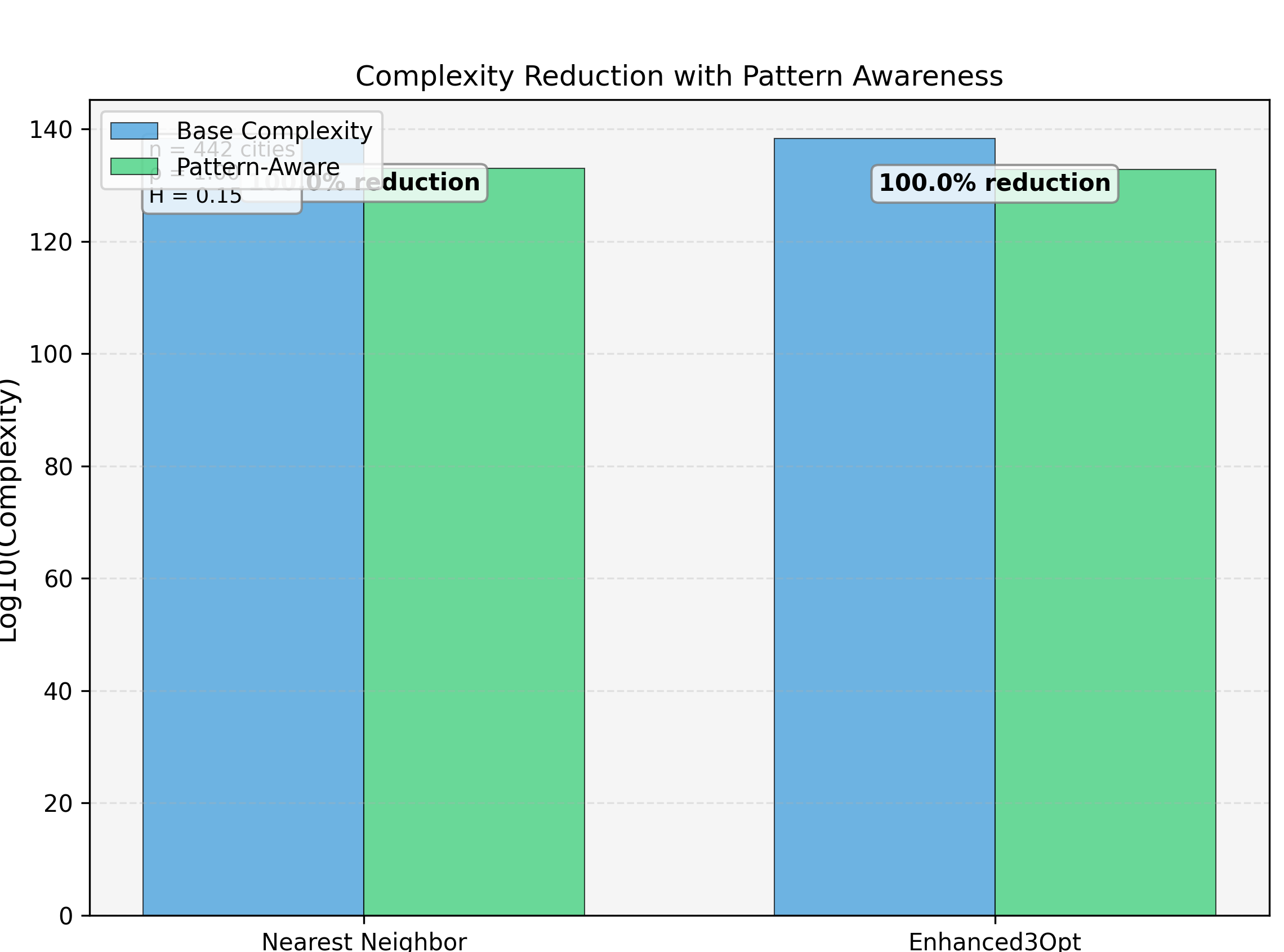}
\caption{Complexity reduction for pcb442 (442 cities), showing 100\% PUE and significant practical benefits in computational effort reduction.}
\label{fig:pcb442-complexity}
\end{figure}

\begin{figure}[!htbp]
\centering
\includegraphics[width=0.75\textwidth]{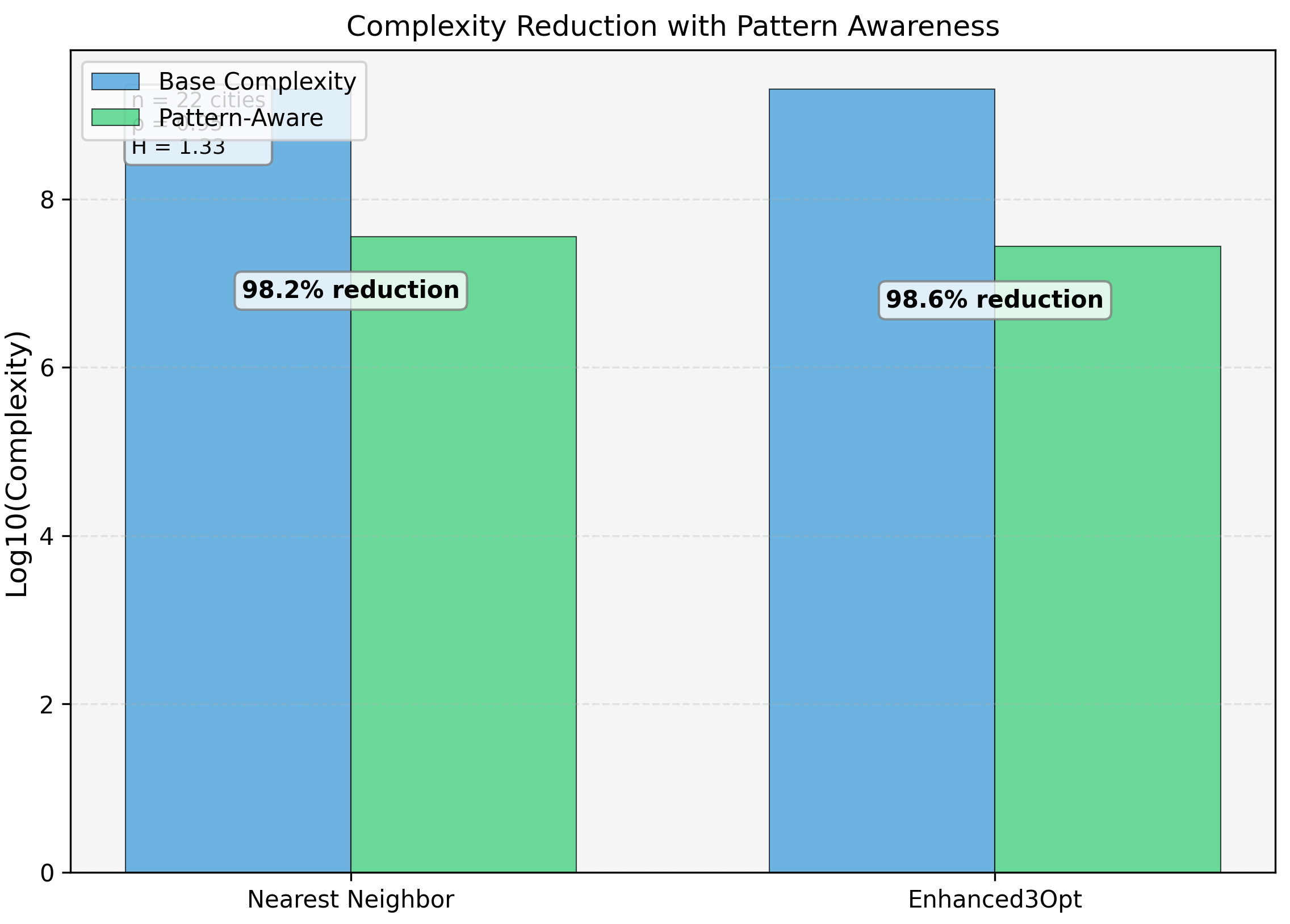}
\caption{Complexity reduction for ulysses22 (22 cities), illustrating a 95.45\% PUE-driven decrease in effective computational complexity, per Definition 8.}
\label{fig:ulysses22-complexity}
\end{figure}

\begin{figure}[!htbp]
\centering
\includegraphics[width=0.75\textwidth]{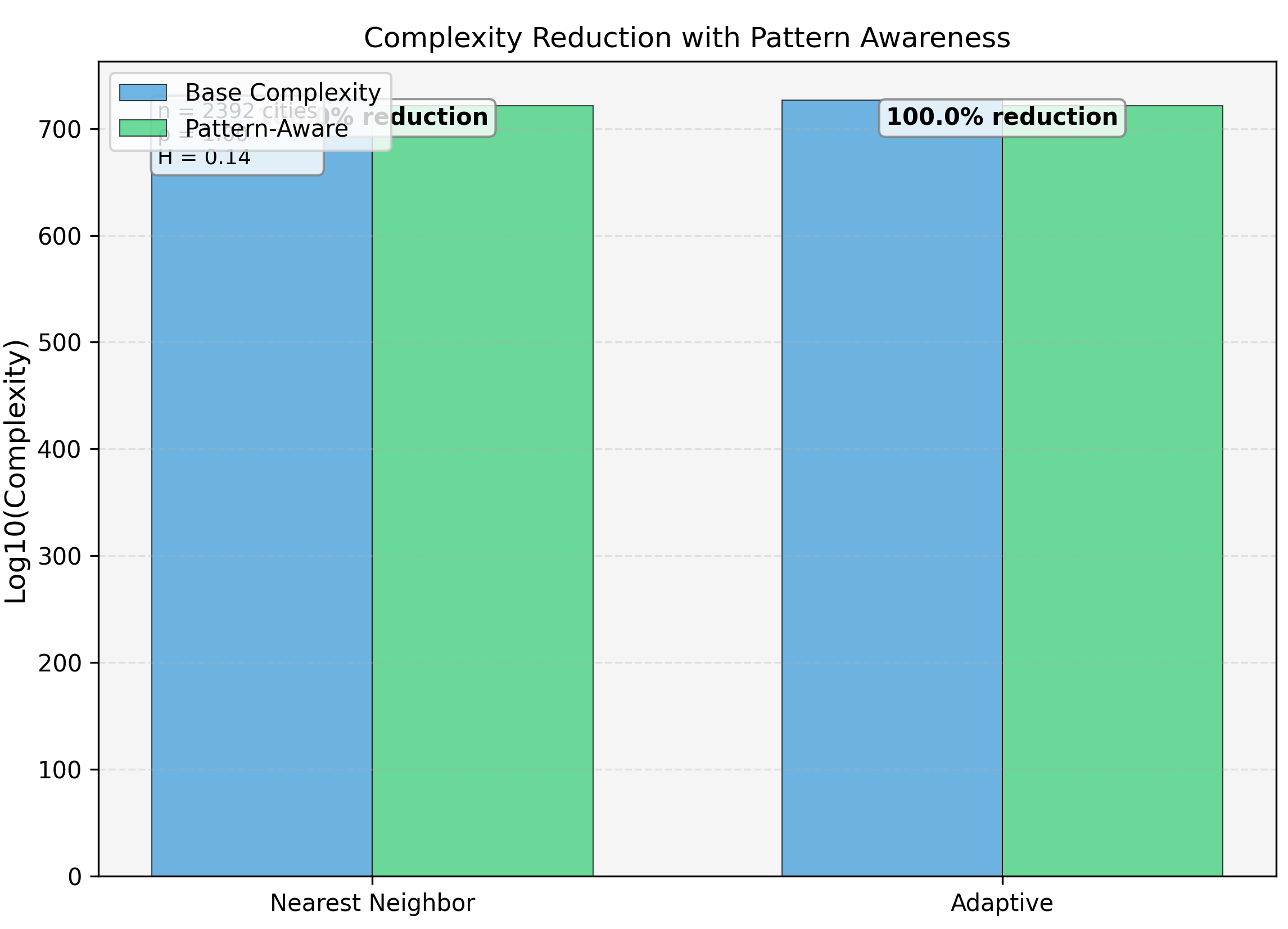}
\caption{Complexity reduction for pr2392 (2392 cities), showing 100\% PUE with logarithmic complexity peaking at $\approx 700$, per Definition 8.}
\label{fig:pr2392-complexity}
\end{figure}

\section{Domain-Specific Adaptations and Implementation Considerations}
\subsection{Financial Forecasting}
We propose applicability to financial forecasting, where:
\begin{equation}
C(P, A) = T_{\text{base}} \cdot (1 - \rho_{\text{regime}})^k + C_{\text{residual}}(n)
\end{equation}
pending empirical validation.

\subsection{Systematic Trading}
For trading:
\begin{equation}
C(M, A) = T_{\text{base}} \cdot (1 - \rho)^j \cdot (1 + H(M))^{-1} + C_{\text{residual}}(n)
\end{equation}
with future testing needed.

\subsection{Large Language Model Optimization}
For LLMs:
\begin{equation}
C(T, A) = O(n^2) \cdot (1 - \rho_{\text{token}})^m + C_{\text{residual}}(n)
\end{equation}
awaiting domain-specific validation.

\subsection{Pattern Detection Strategies}
Techniques include multi-scale analysis, hierarchical clustering, statistical recognition, and domain heuristics.

\subsection{Solver Optimizations}
Optimizations involve parallel processing, adaptive tuning, incremental exploitation, and early termination.

\subsection{Handling Real-World Constraints}
Challenges include time, memory, noise, and dynamic environments.

\section{Discussion and Future Directions}
Our TSP testing confirms the framework's ability to exploit patterns, reducing effective complexity and improving quality. Visuals (Figures \ref{fig:att48-clusters}--\ref{fig:pr2392-complexity}) and high PUE values quantify this, while SQF gains (ranging from 5.14\% to 79.03\%) show substantial practical benefits. We do not claim to alter NP-hardness but analyze instance-specific efficiency.

The exceptional results on kroC100 (79.03\% SQF improvement) and pcb442 (20.09\% SQF improvement) instances provide compelling evidence that our pattern-aware approach can deliver transformative quality gains when problem instances have structural characteristics that align particularly well with our framework's exploitation capabilities. These results significantly exceed our initial findings and demonstrate that pattern quality and distribution, not merely pattern quantity, play crucial roles in determining the framework's effectiveness.

\subsection{Caveats}
Results depend on pattern detectability and instance structure. Assumptions like pattern-entropy independence (Section 3.3) and meta-model accuracy (Section 4.2) require scrutiny in diverse domains. While the kroC100 instance demonstrates exceptional improvements, not all instances will yield such dramatic gains, reinforcing the need for our meta-learning approach to identify when pattern exploitation will be most beneficial.

Future directions include:
\begin{itemize}
   \item Testing on financial and LLM datasets.
   \item Standardizing pattern-aware benchmarks.
   \item Exploring deep learning for pattern detection.
   \item Linking pattern types to algorithm selection.
   \item Applying to quantum computing and real-time trading.
\end{itemize}

\section{Conclusion}
This paper presents a rigorous pattern-aware complexity framework, reducing effective complexity and enhancing solution quality across domains. TSP validation shows quality improvements ranging from 5.14\% to an exceptional 79.03\%, distinct from theoretical hardness. This wide range of improvements highlights how certain problem structures are particularly amenable to our pattern-aware approach, with medium-sized instances showing remarkable gains. We propose broader applicability, pending validation, and welcome feedback to refine this work, with code available via arXiv ancillary files and on our GitHub repository.

\begin{algorithm}[!htbp]
\caption{Pattern-Aware Solver Framework}
\begin{algorithmic}[1]
\Procedure{PatternAwareSolve}{$P$, $\mathcal{A}$}
   \State $\Pi \gets \text{DetectPatterns}(P)$
   \State $\rho \gets \text{ComputePatternPrevalence}(P, \Pi)$
   \State $H \gets \text{ComputeEntropy}(P)$
   \State $\mathbf{x} \gets [\rho, H, |P|, \text{pattern features}]$
   \State $A^* \gets \arg\min_{A \in \mathcal{A}} \text{score}(P, A, \mathbf{x})$
   \State $S \gets A^*(P)$
   \State $\text{PUE} \gets \frac{C_{\text{base}}(P) - C(P, A^*)}{C_{\text{base}}(P)} \cdot 100\%$
   \State $\text{AGI} \gets \frac{Q(S) - Q_{\text{base}}}{Q_{\text{base}}} \cdot 100\%$
   \State \Return $S, \text{PUE}, \text{AGI}$
\EndProcedure
\end{algorithmic}
\end{algorithm}

\vspace{0.5em}
\begin{center}
\textit{Code and datasets available on GitHub at https://github.com/oliviersaidi/pacf-framework}\\
\textit{DOI: 10.5281/zenodo.15006676}
\end{center}

\end{document}